\newcommand{\ubar}[1]{\stackunder[1.2pt]{$#1$}{\rule{.8ex}{.075ex}}}
\newcommand{\human}{\ensuremath{\mathbf{H}}}
\newcommand{\robot}{\ensuremath{\mathbf{R}}}
\newcommand{\optr}{\ensuremath{\robot^{*}}}
\newcommand{\ob}{\ensuremath{\mathcal{O}}}
\newcommand{\adv}{\ensuremath{\Delta}}
\newcommand{\rest}{\ensuremath{\hat{\rparams}}}
\newcommand{\rspace}{\ensuremath{\Theta}}
\newcommand{\hpolicy}{\ensuremath{\pi_{\mathbf{H}}}}
\newcommand{\rpolicy}{\ensuremath{\pi_{\mathbf{R}}}}
\newcommand{\opthp}{\ensuremath{\hpolicy^{*}}}
\newcommand{\nohp}{\ensuremath{\tilde{\pi}_{\mathbf{H}}}}
\newcommand{\optrp}{\ensuremath{\rpolicy^{*}}}
\newcommand{\bopolicy}{\ensuremath{\pi^{O}_{\mathbf{R}}}}
\newcommand{\hactions}{\ensuremath{\mathcal{A}^{\human}}}
\newcommand{\ractions}{\ensuremath{\mathcal{A}^{\robot}}}
\newcommand{\hist}{\ensuremath{h}}
\newcommand{\rparams}{\ensuremath{\theta}}
\newtheorem{thm}{Theorem}
\newtheorem{lem}{Lemma}
\newtheorem{rem}{Remark}
\DeclareMathOperator*{\argmax}{argmax}
\newcommand{\prg}[1]{\noindent\textbf{#1.}}
\title{Should Robots be Obedient?}
\author{Smitha Milli, Dylan Hadfield-Menell, Anca Dragan, Stuart Russell \\ 
University of California, Berkeley  \\
\{smilli,dhm,anca,russell\}@berkeley.edu}
\begin{document}

\maketitle

\begin{abstract}
 Intuitively, obedience -- following the order that a human gives -- seems like a good property for a robot to have. But, we humans are not perfect and we may give orders that are not best aligned to our preferences. We show that when a human is not perfectly rational then a robot that tries to infer and act according to the human's underlying preferences can always perform better than a robot that simply follows the human's literal order. Thus, there is a tradeoff between the obedience of a robot and the value it can attain for its owner. We investigate how this tradeoff is impacted by the way the robot infers the human's preferences, showing that some methods err more on the side of obedience than others. We then analyze how performance degrades when the robot has a misspecified model of the features that the human cares about or the level of rationality of the human. Finally, we study how robots can start detecting such model misspecification. Overall, our work suggests that there might be a middle ground in which robots intelligently decide when to obey human orders, but err on the side of obedience. 
\end{abstract}

\section{Introduction}
Should robots be obedient? The reflexive answer to this question is yes. A coffee making robot that doesn't listen to your coffee order is not likely to sell well. Highly capable autonomous system that don't obey human commands run substantially higher risks, ranging from property damage to loss of life \citep{asaro2006should,lewis2014case} to potentially catastrophic threats to humanity \citep{bostrom2014superintelligence,russell2015research}. Indeed, there are several recent examples of research that considers the problem of building agents that at the very least obey shutdown commands \citep{soares2015corrigibility,Orseau2016SafelyIA,hadfield2016off}.

\begin{figure}[t]
\centering
\includegraphics[width=\columnwidth]{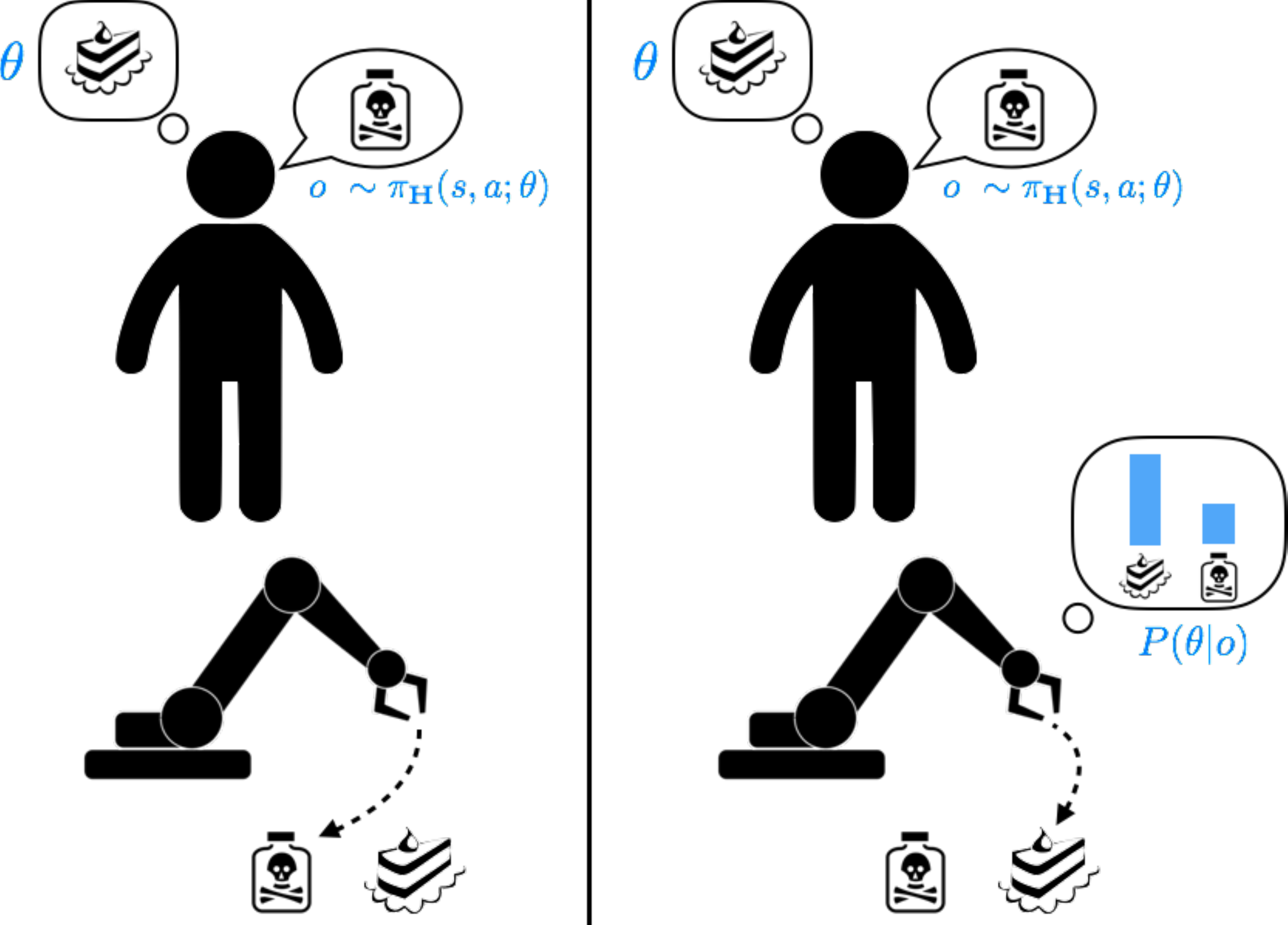}
\caption{(Left) The blindly obedient robot always follows \human{}'s order. (Right) An IRL-R computes an estimate of \human{}'s preferences and picks the action optimal for this estimate.}
\label{fig:adv-rat}
\end{figure}

However, in the long-term making systems blindly obedient doesn't seem right either. A self-driving car should certainly defer to its owner when she tries taking over because it's driving too fast in the snow. But on the other hand, the car shouldn't let a child accidentally turn on the manual driving mode. 

The suggestion that it might sometimes be better for an autonomous systems to be disobedient is not new \citep{weld1994first,scheutz2007burden}. For example, this is the idea behind ``Do What I Mean" systems \citep{teitelman19707} that attempt to act based on the user's intent rather than the user's literal order. 

A key contribution of this paper is to formalize this idea, so that we can study properties of obedience in AI systems. Specifically, we focus on investigating how the tradeoff between the robot's level of obedience and the value it attains for its owner is affected by the rationality of the human, the way the robot learns about the human's preferences over time, and the accuracy of the robot's model of the human. We argue that these properties are likely to have a predictable effect on the robot's obedience and the value it attains.

We start with a model of the interaction between a human \human{} and robot\footnote{We use ``robot" to refer to any autonomous system.} \robot{} that enables us to formalize \robot{}'s level of obedience (Section \ref{sec:model}). \human{} and \robot{} are cooperative, but \human{} knows the reward parameters \rparams{} and \robot{} does not. \human{} can order \robot{} to take an action and \robot{} can decide whether to obey or not. We show that if \robot{} tries to infer \rparams{} from \human{}'s orders and then acts by optimizing its estimate of \rparams{}, then it can always do better than a blindly obedient robot when \human{} is not perfectly rational (Section \ref{sec:just-auto}). Thus, forcing \robot{} to be blindly obedient does not come for free: it requires giving up the potential to surpass human performance. 

We cast the problem of estimating \rparams{} from \human{}'s orders as an \textit{inverse reinforcement learning} (IRL) problem \citep{ng2000algorithms,abbeel2004apprenticeship}. We analyze the obedience and value attained by robots with different estimates for \rparams{} (Section \ref{sec:irl}). In particular, we show that a robot that uses a maximum likelihood estimate (MLE) of \rparams{} is more obedient to \human{}'s first order than any other robot.

Finally, we examine how \robot{}'s value and obedience is impacted when it has a misspecified model of \human{}'s policy or \rparams{} (Section \ref{sec:wrong-model}). We find that when \robot{} uses the MLE it is \textit{robust} to misspecification of \human{}'s rationality level (i.e. takes the same actions that it would have with the true model), although with the optimal policy it is not. This suggests that we may want to use policies that are alternative to the ``optimal" one because they are more robust to model misspecification.

If \robot{} is missing features of \rparams{}, then it is less obedient than it should be, whereas with extra, irrelevant features \robot{} is more obedient. This suggests that to ensure that \robot{} errs on the side of obedience we should equip it with a \textit{more} complex model. When \robot{} has extra features, then it still attains more value than a blindly obedient robot. But if \robot{} is missing features, then it is possible for \robot{} to be better off being obedient. We use the fact that with the MLE \robot{} should nearly always obey \human{}'s first order (as proved in Section \ref{sec:irl}) to enable \robot{} to detect when it is missing features and act accordingly obedient.

Overall, we conclude that in the long-term we should aim for \robot{} to intelligently decide when to obey \human{} or not, since with a perfect model \robot{} can always do better than being blindly obedient. But our analysis also shows that \robot{}'s value and obedience can easily be impacted by model misspecification. So in the meantime, it is critical to ensure that our approximations err on the side of obedience and are robust to model misspecification.

\section{Human-Robot Interaction Model} 
\label{sec:model}

Suppose \human{} is supervising \robot{} in a task. At each step \human{} can order \robot{} to take an action, but \robot{} chooses whether to listen or not. We wish to analyze \robot{}'s incentive to obey \human{} given that

\begin{enumerate}
    \item \human{} and \robot{} are cooperative (have a shared reward)
    \item \human{} knows the reward parameters, but \robot{} does not
    \item \robot{} can learn about the reward through \human{}'s orders
    \item \human{} may act suboptimally
\end{enumerate}

We first contribute a general model for this type of interaction, which we call a \textit{supervision POMDP}. Then we add a simplifying assumption that makes this model clearer to analyze while still maintaining the above properties, and focus on this simplified version for the rest of the paper.

\prg{Supervision POMDP} At each step in a supervision POMDP \human{} first orders \robot{} to take a particular action and then \robot{} executes an action it chooses. The POMDP is described by a tuple $M = \langle \mathcal{S}, \rspace, \mathcal{A}, R, T, P_0, \gamma \rangle$. $\mathcal{S}$ is a set of world states. \rspace{} is a set of static reward parameters. The hidden state space of the POMDP is $\mathcal{S} \times \rspace$ and at each step \robot{} observes the current world state and \human{}'s order. $\mathcal{A}$ is \robot{}'s set of actions. $R : \mathcal{S} \times \mathcal{A} \times \rspace \rightarrow \mathbb{R}$ is a parametrized, bounded function that maps a world state, the robot's action, and the reward parameters to the reward. $T : \mathcal{S} \times \mathcal{A} \times \mathcal{S} \rightarrow [0, 1]$ returns the probability of transitioning to a state given the previous state and the robot's action. $P_0 : \mathcal{S} \times \rspace \rightarrow [0, 1]$ is a distribution over the initial world state and reward parameters. $\gamma \in [0, 1)$ is the discount factor.

We assume that there is a (bounded) featurization of state-action pairs $\phi : \mathcal{S} \times \mathcal{A} \rightarrow \mathbb{R}$ and the reward function is a linear combination of the reward parameters $\rparams \in \rspace$ and these features: $R(s, a) = \rparams^T\phi(s, a)$. For clarity, we write $\mathcal{A}$ as \hactions{} when we mean \human{}'s orders and as \ractions{} when we mean \robot{}'s actions. \human{}'s policy \hpolicy{} is Markovian: $\hpolicy : \mathcal{S} \times \rspace \times \hactions \rightarrow [0, 1]$. \robot{}'s policy can depend on the history of previous states, orders, and actions: $\rpolicy : [\mathcal{S} \times \hactions \times \ractions]^{*} \times \mathcal{S} \times \hactions \rightarrow \ractions$.

\prg{Human and Robot}
Let $Q(s, a; \rparams{})$ be the $Q$-value function under the optimal policy for the reward function parametrized by \rparams{}.\

A \textbf{rational} human gives the optimal order, i.e. follows the policy
$$\hpolicy^{*}(s, a; \rparams) = \begin{cases}
1 & \text{if } a = \argmax_{a} Q(s, a; \rparams{}) \\
0 & \text{o.w.}
\end{cases}$$

A \textbf{noisily rational} human follows the policy
\begin{equation}
\label{noisily-rational}
\nohp(s, a; \theta, \beta) \propto \exp \left ( Q(s, a; \rparams{})/\beta \right )
\end{equation}
$\beta$ is the rationality parameter. As $\beta \rightarrow 0$, \human{} becomes rational ($\nohp\rightarrow \hpolicy^{*}$). And as $\beta \rightarrow \infty$, \human{} becomes completely random ($\nohp \rightarrow \text{Unif}(\mathcal{A})$).

Let $\hist = \langle (s_1, o_1), \dots, (s_n, o_n) \rangle$ be this history of past states and orders where $(s_n, o_n)$ is the current state and order. A \textbf{blindly obedient} robot's policy is to always follow the human's order:
$$\bopolicy(\hist) = o_{n}$$

An \textbf{IRL} robot, IRL-\robot{}, is one whose policy is to maximize an estimate, $\rest_{n}(\hist)$, of \rparams{}:
\begin{equation}
\label{eq:irl-r}
\rpolicy(\hist) = \argmax_{a} Q(s_n, a; \rest_{n}(\hist))
\end{equation}

\begin{figure*}[htp]
\begin{subfigure}{\columnwidth}
\includegraphics[width=\columnwidth]{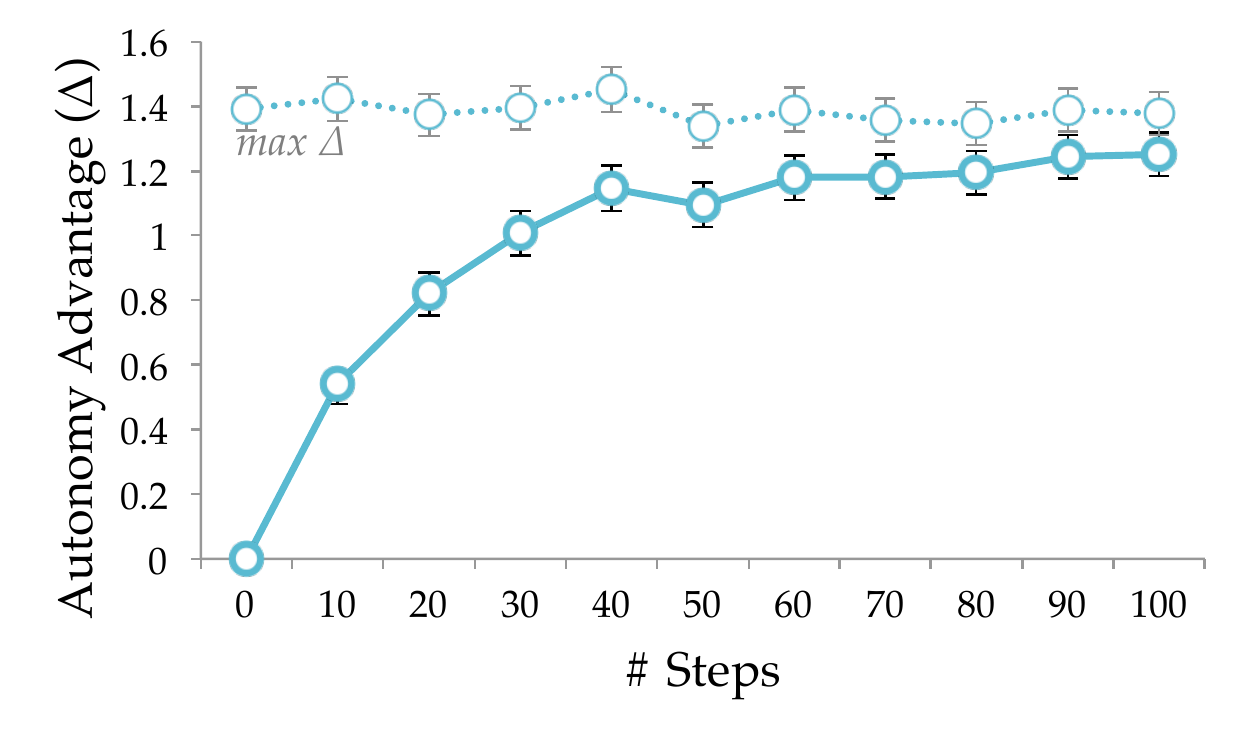}
\centering
\caption{}
\label{fig:adv}
\end{subfigure}
\begin{subfigure}{\columnwidth}
\includegraphics[width=\columnwidth]{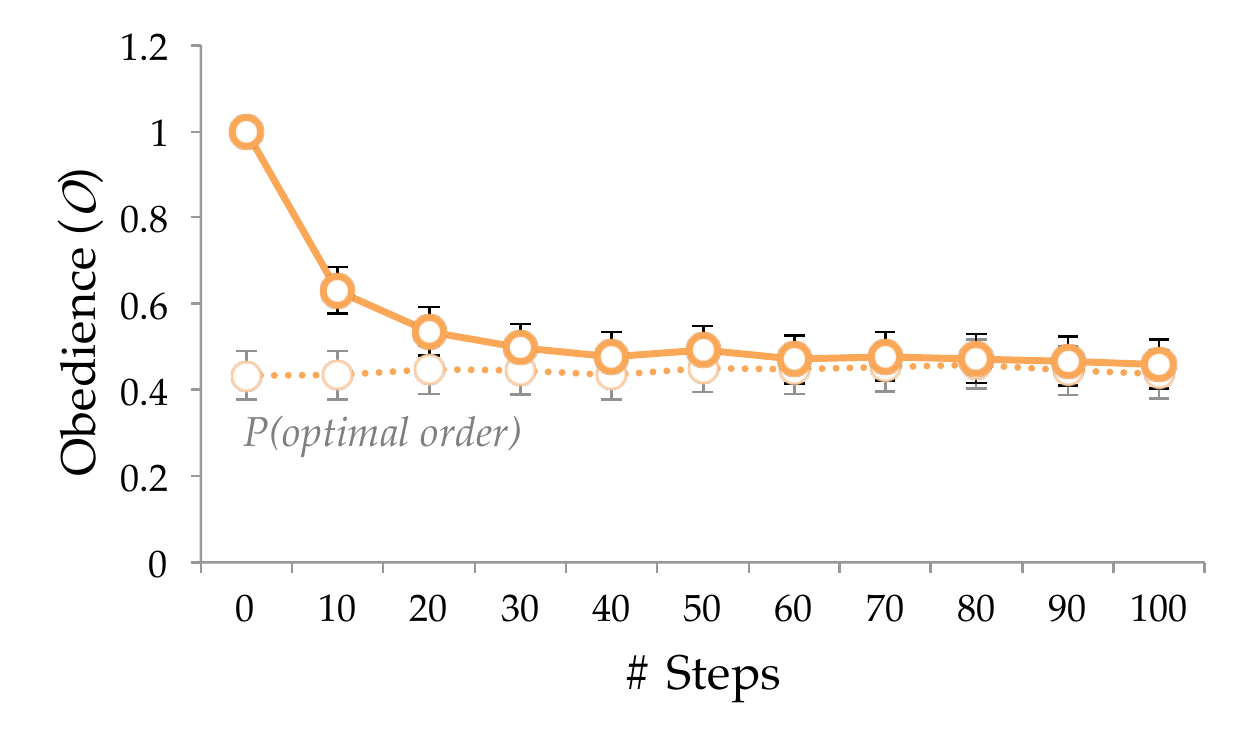}
\centering
\caption{}
\label{fig:obedience}
\end{subfigure}
\caption{Autonomy advantage \adv{} (left) and obedience \ob{} (right) over time.}
\end{figure*}

\prg{Simplification to Repeated Game}
For the rest of the paper unless otherwise noted we focus on a simpler repeated game in which each state is independent of the next, i.e $T(s, a, s')$ is independent of $s$ and $a$. The repeated game eliminates any exploration-exploitation tradeoff: $Q(s, a; \rest_{n}) = \rest_{n}^T \phi(s, a)$. But it still maintains the properties listed at the beginning of this section, allowing us to more clearly analyze their effects. 

\section{Justifying Autonomy}
\label{sec:just-auto}
In this section we show that there exists a tradeoff between the performance of a robot and its obedience. This provides a justification for why one might want a robot that isn't obedient: robots that are sometimes disobedient perform better than robots that are blindly obedient.

We define \robot{}'s \textbf{obedience}, \ob{}, as the probability that \robot{} follows \human{}'s order:
$$\ob_{n} = P(\rpolicy(\hist) = o_{n})$$
To study how much of an advantage (or disadvantage) \human{} gains from \robot{}, we define the \textbf{autonomy advantage}, \adv, as the expected extra reward \robot{} receives over following \human{}'s order:
$$\adv_{n} = \mathbb{E}[R(s_n, \rpolicy(\hist)) - R(s_n, o_{n})]$$

We will drop the subscript on $\ob_{n}$ and $\adv_{n}$ when talking about properties that hold $\forall n$. We will also use $R_n(\pi)$ to denote the reward of policy $\pi$ at step $n$, and $\phi_n(a) = \phi(s_n, a)$.

\begin{rem}
\label{thm:adv-obd}
For the robot to gain any advantage from being autonomous, it must sometimes be disobedient: $\adv > 0 \implies \ob < 1$.
\end{rem}

This is because whenever \robot{} is obedient $\adv = 0$. This captures the fact that a blindly obedient \robot{} is limited by \human{}'s decision making ability. However, if \robot{} follows a type of IRL policy, then \robot{} is \emph{guaranteed a positive advantage} when \human{} is not rational. The next theorem states this formally.

\begin{thm} The optimal robot $\optr{}$ is an IRL-R whose policy $\rpolicy^{*}$ has $\rest{}$ equal to the posterior mean of \rparams{}. \optr{} is guaranteed a nonnegative advantage on each round: $\forall n$ $\adv_{n} \geq 0$ with equality if and only if $\forall n$ $\rpolicy^{*} = \bopolicy$.
\end{thm}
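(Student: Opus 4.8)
The plan is to separate the claim into three pieces: identifying the optimal estimate as the posterior mean, establishing nonnegativity of the advantage, and pinning down the equality case.

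First I would argue that the optimal robot is myopic. In the repeated game the transition $T(s,a,s')$ is independent of the current state and action, and the orders \human{} generates depend only on \rparams{} and the current state (since \hpolicy{} is Markovian), not on \robot{}'s past actions. Hence \robot{}'s choice at step $n$ influences only the reward collected at step $n$, so the globally optimal policy maximizes the expected immediate reward given everything observed so far. Writing $\bar{\rparams}_n = \expect[\rparams \mid \hist]$ for the posterior mean and using linearity of the reward in \rparams{} together with linearity of expectation, $\expect[R(s_n, a) \mid \hist] = \expect[\rparams^T \phi_n(a) \mid \hist] = \bar{\rparams}_n^T \phi_n(a)$. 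Maximizing this over $a$ is exactly the IRL-\robot{} rule with $\rest_n(\hist) = \bar{\rparams}_n$, which establishes the first claim.

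Next, for nonnegativity I would condition on the history and apply the tower property. Since both $o_n$ and $\optrp(\hist)$ are determined by \hist{}, the conditional advantage is $\expect[R(s_n,\optrp(\hist)) - R(s_n,o_n)\mid\hist] = \bar{\rparams}_n^T\big(\phi_n(\optrp(\hist)) - \phi_n(o_n)\big)$. Because $\optrp(\hist) = \argmax_a \bar{\rparams}_n^T\phi_n(a)$, this difference is nonnegative for every realization of \hist{}, and taking the outer expectation yields $\adv_n \geq 0$.

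Finally, the equality case. The easy direction is immediate: if $\optrp = \bopolicy$, i.e. \robot{} always plays $o_n$, then the per-step reward difference is identically zero and $\adv_n = 0$. For the converse, $\adv_n$ is the expectation of the conditionally nonnegative quantity above, so $\adv_n = 0$ forces $\bar{\rparams}_n^T\phi_n(\optrp(\hist)) = \bar{\rparams}_n^T\phi_n(o_n)$ almost surely; adopting the natural tie-breaking convention that the argmax defers to \human{}'s order whenever $o_n$ attains the maximum posterior-expected reward, this gives $\optrp(\hist) = o_n$ almost surely, i.e. $\optrp = \bopolicy$. The main obstacle I anticipate is exactly this tie-breaking subtlety: absent a convention, $\adv_n = 0$ only guarantees that \robot{}'s chosen action ties \human{}'s order in posterior-expected reward, not that the two actions literally coincide, so the ``only if'' direction must be read modulo indifference.
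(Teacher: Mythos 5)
Your proof is correct and takes essentially the same route as the paper's: a myopic reduction to per-step optimization in the repeated game, linearity of expectation identifying the optimal estimate with the posterior mean $\expect[\rparams \mid \hist]$, and nonnegativity of $\adv_{n}$ from the argmax property conditional on the history. The one substantive difference is that you make explicit the tie-breaking caveat in the ``only if'' direction, which the paper's proof elides by asserting $\adv_{n} = 0 \iff \rpolicy^{*} = \bopolicy$ ``by definition''; the obedient tie-breaking convention you adopt is in fact the same one the paper only introduces later as an assumption in Theorem~\ref{thm:conv}.
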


\begin{proof}
    When each step is independent of the next \robot{}'s optimal policy is to pick the action that is optimal for the current step \citep{kaelbling1996reinforcement}. This results in \robot{} picking the action that is optimal for the posterior mean,
    \begin{align*}
        \rpolicy^{*}(\hist) = \max_{a} \mathbb{E} [\phi_n(a)^T\theta|\hist] = \max_{a} \phi_n(a)^T\mathbb{E} [\theta|\hist]
    \end{align*}
    By definition $\mathbb{E}[R_n(\rpolicy^{*})] \geq \mathbb{E}[R_n(\bopolicy)]$. Thus, $\forall n$ $\adv_{n} = \mathbb{E}[R_n(\rpolicy^{*}) - R_n(\bopolicy)] \geq 0$. Also, by definition, $\forall n$ $\adv_{n} = 0 \iff \rpolicy^{*} = \bopolicy$.
\end{proof}

In addition to \optr{} being an IRL-R, the following IRL-Rs also converge to the maximum possible autonomy advantage.
\begin{thm}
\label{thm:conv}
Let $\bar{\adv}_{n} = \mathbb{E}[R_n(\hpolicy^{*}) - R_n(\hpolicy)]$ be the maximum possible autonomy advantage and $\ubar{\ob}_{n} = P(R_n(\hpolicy^{*}) = R_n(\hpolicy))$ be the probability \human{}'s order is optimal. Assume that when there are multiple optimal actions \robot{} picks \human{}'s order if it is optimal. If \rpolicy{} is an IRL-R policy (Equation \ref{eq:irl-r}) and $\rest_{n}$ is strongly consistent, i.e $P(\rest_{n} = \rparams) \rightarrow 1$, then
$\adv_{n} - \bar{\adv}_{n} \rightarrow 0$ and $\ob_{n} - \ubar{\ob}_{n} \rightarrow 0$.
\end{thm}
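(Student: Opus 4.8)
The plan is to reduce both claims to the strong-consistency event $E_n = \{\rest_{n} = \rparams\}$, on which the IRL-R policy \rpolicy{} coincides with the rational human \hpolicy^* (in reward always, and in action under the stated tie-breaking rule), and then control the complement $E_n^c$ using the boundedness of $R$ together with $P(E_n^c) = P(\rest_{n}\neq\rparams) \to 0$. Since \robot{} plays the deterministic argmax of $\phi_n(a)^T\rest_{n}$, all randomness is over the state and \human{}'s order, and $E_n$ is a well-defined event in this space.

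First I would rewrite the advantage gap. Because $\adv_{n} = \mathbb{E}[R_n(\rpolicy) - R_n(\hpolicy)]$ and $\bar{\adv}_{n} = \mathbb{E}[R_n(\hpolicy^*) - R_n(\hpolicy)]$ share the $R_n(\hpolicy)$ term, it cancels and
\[
\adv_{n} - \bar{\adv}_{n} = \mathbb{E}\bigl[R_n(\rpolicy) - R_n(\hpolicy^*)\bigr].
\]
On $E_n$ the robot maximizes $\phi_n(a)^T\rest_{n} = \phi_n(a)^T\rparams$, so \rpolicy{} selects an action optimal under the true \rparams{}; hence $R_n(\rpolicy) = R_n(\hpolicy^*)$ and the integrand vanishes there. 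Note this requires no tie-breaking, since every action optimal under the true \rparams{} attains the optimal reward. On $E_n^c$ the integrand is bounded by $2R_{\max}$, where $R_{\max}$ bounds $|R|$ (the reward is assumed bounded). Therefore $|\adv_{n} - \bar{\adv}_{n}| \le 2R_{\max}\,P(E_n^c) \to 0$ by strong consistency.

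For obedience I would condition on the same event, and here the tie-breaking convention — \robot{} picks \human{}'s order whenever that order is optimal — is exactly what makes the alignment work; this is the step I expect to be the main obstacle, since without it $\{\rpolicy(\hist)=o_n\}$ need not match optimality of the order on $E_n$. On $E_n$ the event $A = \{\rpolicy(\hist) = o_n\}$ coincides with $B = \{R_n(\hpolicy) = R_n(\hpolicy^*)\} = \{o_n\text{ is optimal under }\rparams\}$: if $o_n$ is optimal the convention forces $\rpolicy = o_n$, and otherwise \robot{} picks a strictly better action so $\rpolicy \neq o_n$. Thus $A \cap E_n = B \cap E_n$, giving $P(A\cap E_n) = P(B\cap E_n)$, and
\[
\ob_{n} - \ubar{\ob}_{n} = P(A) - P(B) = P(A\cap E_n^c) - P(B\cap E_n^c).
\]
Each probability on the right lies in $[0, P(E_n^c)]$, so $|\ob_{n} - \ubar{\ob}_{n}| \le P(E_n^c) = P(\rest_{n}\neq\rparams) \to 0$, which completes the argument.
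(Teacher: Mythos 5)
Your proof is correct and follows essentially the same route as the paper's: both arguments cancel the shared $R_n(\hpolicy)$ term, decompose $\adv_{n} - \bar{\adv}_{n}$ and $\ob_{n} - \ubar{\ob}_{n}$ over the event $\{\rest_{n} = \rparams\}$ and its complement, note that on that event the IRL-R action is optimal for the true $\rparams$ (with the tie-breaking convention aligning obedience with optimality of $o_n$), and use boundedness of the reward together with $P(\rest_{n} \neq \rparams) \rightarrow 0$ to make the complement terms vanish. If anything, your write-up is slightly more explicit than the paper's -- the quantitative bounds $2R_{\max}P(E_n^c)$ and $P(E_n^c)$, and the observation that the reward claim needs no tie-breaking while the obedience claim does -- but the underlying argument is the same.
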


\begin{proof} 
\begin{align*}
    & \adv_{n} - \bar{\adv}_{n} = \mathbb{E}[R_n(\rpolicy) - R_n(\hpolicy^{*})|\rest_{n} = \rparams]P(\rest_{n} = \rparams) \\
    & + \mathbb{E}[R_n(\rpolicy) - R_n(\hpolicy^{*})|\rest_{n} \neq \rparams]P(\rest_{n} \neq \rparams) \rightarrow 0
\end{align*}
because $\mathbb{E}[R_n(\rpolicy) - R_n(\hpolicy^{*})|\rest_{n} \neq \rparams]$ is bounded. Similarly,
\begin{align*}
    & \ob_{n} - \ubar{\ob}_{n} = P(\rpolicy(\hist) = \hpolicy(s_n)) - P(R_n(\hpolicy^{*}) = R_n(\hpolicy)) \\
    & = P(\rpolicy(\hist) = \hpolicy(s_n) | \rest_{n} = \rparams)P(\rest_{n} = \rparams) \\
    & + P(\rpolicy(\hist) = \hpolicy(s_n) | \rest_{n} \neq \rparams)P(\rest_{n} \neq \rparams) \\
    & - P(R_n(\hpolicy^{*}) = R_n(\hpolicy)) \\
    & \rightarrow P(R_n(\hpolicy^{*}) = R_n(\hpolicy)) - P(R_n(\hpolicy^{*}) = R_n(\hpolicy)) = 0
\end{align*}
\end{proof}

\begin{rem}
\label{rationality-adv}
In the limit $\adv_{n}$ is \textit{higher} for less optimal humans (humans with a lower expected reward $\mathbb{E}[R(s_n, o_{n})]$).
\end{rem}

\begin{thm}
    The optimal robot \optr{} is blindly obedient if and only if \human{} is rational: $\optrp = \bopolicy \iff \hpolicy = \opthp$ 
\end{thm}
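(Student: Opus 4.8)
The plan is to reduce the claim to the characterization supplied by the earlier theorem that $\adv_{n} \geq 0$, with equality at every step if and only if $\rpolicy^{*} = \bopolicy$. It therefore suffices to show that \human{} is rational exactly when the optimal robot's per-round advantage vanishes everywhere, i.e.\ $\hpolicy = \opthp \iff (\forall n,\ \adv_{n} = 0)$. I would prove the two directions separately, using the nonnegative-advantage characterization for the ``obedient'' side and \thmref{thm:conv} for the asymptotics of the advantage.

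For the easy direction, suppose \human{} is rational, so $\hpolicy = \opthp$ and each order is optimal for the true parameters: $R_n(\hpolicy) = \max_{a} \phi_n(a)^{T}\rparams$. Since this is the best reward attainable for the realized \rparams{}, it dominates the robot's reward pointwise, $R_n(\rpolicy^{*}) \leq R_n(\hpolicy)$, and taking expectations gives $\adv_{n} = \mathbb{E}[R_n(\rpolicy^{*}) - R_n(\hpolicy)] \leq 0$. Combined with $\adv_{n} \geq 0$ from the nonnegative-advantage theorem, this forces $\adv_{n} = 0$ for every $n$, and that theorem's equality condition then yields $\rpolicy^{*} = \bopolicy$.

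For the converse I would argue the contrapositive: if \human{} is not rational (a noisily rational human with $\beta > 0$), then $\rpolicy^{*} \neq \bopolicy$. First, because such a human assigns positive probability to suboptimal orders, the maximum attainable advantage is strictly positive, $\bar{\adv}_{n} = \mathbb{E}[R_n(\opthp) - R_n(\hpolicy)] > 0$. Second, the optimal robot's estimate $\rest_{n}$ is the posterior mean of \rparams{}, and in the repeated game the orders are informative (distinct parameters induce distinct order distributions under \eqnref{noisily-rational}), so the posterior concentrates and $\rest_{n}$ is strongly consistent. \thmref{thm:conv} then gives $\adv_{n} \to \bar{\adv}_{n} > 0$, so $\adv_{n} > 0$ for all sufficiently large $n$; by the nonnegative-advantage characterization this means $\rpolicy^{*} \neq \bopolicy$, so the optimal robot is not blindly obedient.

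I expect the consistency step in the converse to be the main obstacle. A single round does not suffice: the posterior mean can make the human's order look optimal at an early step even when that order is genuinely suboptimal, so one cannot conclude disobedience from $\bar{\adv}_{n} > 0$ at a fixed $n$. The argument must instead exploit learning across rounds, which hinges on identifiability of \rparams{} from the noisily rational orders and the resulting strong consistency needed to invoke \thmref{thm:conv}. This also delimits the scope of the statement: it should be read within the noisily rational family, since an arbitrary \rparams{}-independent human policy could be uninformative and leave the robot trivially obedient without being rational.
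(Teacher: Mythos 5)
Your proposal is correct, and its two halves relate differently to the paper's proof. For the converse (obedient $\implies$ rational) you follow essentially the paper's route: the paper also invokes consistency of the posterior mean (citing Diaconis--Freedman, given prior mass on the true \rparams{}) and \thmref{thm:conv} to force $\bar{\adv}_n \rightarrow 0$, hence rationality; your contrapositive phrasing within the noisily rational family is the same argument run backwards, and your explicit identifiability caveat is in fact more careful than the paper, which silently assumes the conditions for consistency hold. For the forward direction (rational $\implies$ obedient) you take a genuinely different and shorter route: you sandwich $\adv_n$ between the pointwise bound $R_n(\rpolicy^{*}) \leq R_n(\opthp)$ and the nonnegativity guarantee from the first theorem of \secref{sec:just-auto}, then invoke that theorem's equality clause. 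The paper instead argues directly: it defines $O(\hist)$, the set of parameters making all observed orders optimal, notes the posterior is supported on $O(\hist)$ when \human{} is rational, and shows by an explicit integral comparison that $\mathbb{E}[R_n(a)\,|\,\hist] \leq \mathbb{E}[R_n(o_n)\,|\,\hist]$, so the order maximizes posterior expected reward at every history. Your reduction buys brevity, but be aware it inherits the informality of the equality clause you cite (the paper asserts $\adv_n = 0 \iff \rpolicy^{*} = \bopolicy$ ``by definition,'' which really requires the tie-breaking convention that \robot{} picks \human{}'s order when it is among the optimizers); the paper's direct computation makes the mechanism of obedience visible at each step, whereas yours only certifies it in expectation and then appeals to that convention. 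Neither gap is yours alone --- both proofs stand or fall on the same tie-breaking assumption --- so your argument is a valid alternative decomposition.
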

\begin{proof}

    Let $O(\hist) = \{ \rparams \in \rspace : o_i = \argmax_{a} R_i(a), i = 1, \dots, n \}$ be the subset of \rspace{} for which $o_1, \dots, o_{n}$ are optimal. If \human{} is rational, then \robot{}'s posterior only has support over $O(\hist)$. So,
\begin{align*}
    & \mathbb{E}[R_n(a)|\hist] = \int_{\rparams \in O(\hist)} \rparams^T \phi_n(a) P(\rparams|\hist) d\rparams \\ & \leq \int_{\rparams \in O(\hist)} \rparams^T \phi_n(o_{n}) P(\rparams|\hist) d\rparams = \mathbb{E}[R_n(o_{n})|\hist]
\end{align*}

Thus, \human{} is rational $\implies \rpolicy^{*} = \bopolicy$. 

\optr{} is an IRL-\robot{} where $\rest_{n}$ is the posterior mean. If the prior puts non-zero mass on the true \rparams{}, then the posterior mean is consistent \citep{diaconis1986consistency}. Thus by Theorem \ref{thm:conv}, $\adv_{n} - \bar{\adv}_n \rightarrow 0$. Therefore if $\forall n$ $\adv_{n} = 0$, then $\bar{\adv}_n \rightarrow 0$, which implies that $P(\hpolicy = \hpolicy^{*}) \rightarrow 1$. When \hpolicy{} is stationary this means that \human{} is rational. Thus, $\rpolicy^{*} = \bopolicy \implies$ \human{} is rational.
\end{proof}

We have shown that making \robot{} blindly obedient does not come for free. A positive \adv{} \textit{requires} being sometimes disobedient (Remark \ref{thm:adv-obd}). Under the optimal policy \robot{} is guaranteed a positive \adv{} when \human{} is not rational. And in the limit, \robot{} converges to the maximum possible advantage. Furthermore, the more suboptimal \human{} is, the more of an advantage \robot{} eventually earns (Remark \ref{rationality-adv}). Thus, making \robot{} blindly obedient requires giving up on this potential $\adv > 0$.

However, as Theorem \ref{thm:conv} points out, as $n \rightarrow \infty$ \robot{} also only listens to \human{}'s order when it is optimal. Thus, \adv{} and \ob{} come at a tradeoff. Autonomy advantage requires giving up obedience, and obedience requires giving up autonomy advantage.

\begin{figure}[t]
\centering
\includegraphics[width=\columnwidth]{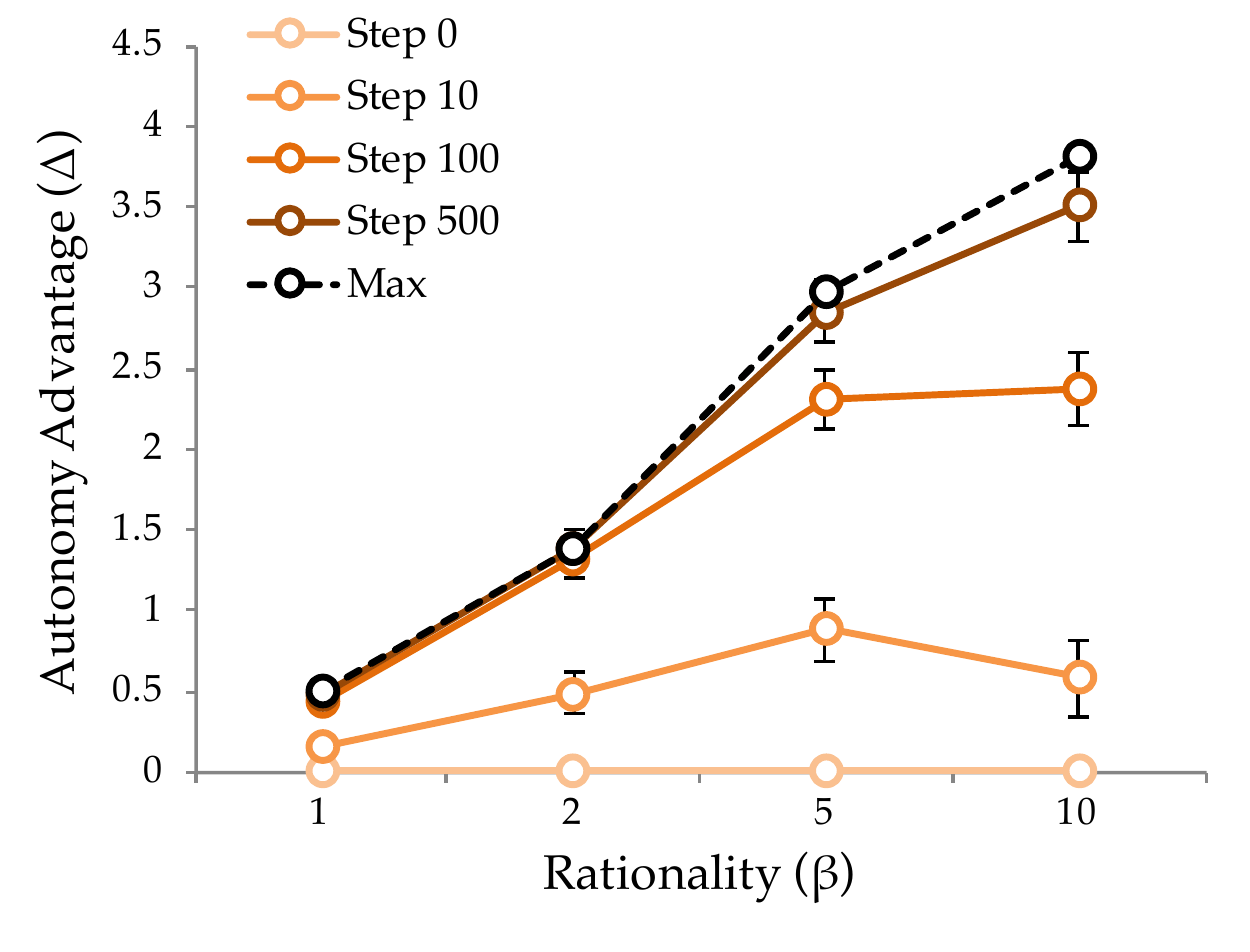}
\caption{When \human{} is more irrational \adv{} converges to a higher value, but at a slower rate.}
\label{fig:adv-rat}
\end{figure}

\section{Approximations via IRL}
\label{sec:irl}
\optr{} is an IRL-\robot{} with $\rest{}$ equal to the posterior mean, i.e. \optr{} performs Bayesian IRL \citep{bayesianIRL}. However, as others have noted Bayesian IRL can be very expensive in complex environments \citep{michini2012improving}. We could instead approximate \optr{} by using a less expensive IRL algorithm.  Furthermore, by Theorem \ref{thm:conv} we can guarantee convergence to optimal behavior.

Simpler choices for $\rest{}$ include the maximum-a-posteriori (MAP) estimate, which has previously been suggested as an alternative to Bayesian IRL \citep{choi2011map}, or the maximum likelihood estimate (MLE). If \human{} is noisily rational (Equation \ref{noisily-rational}) and $\beta = 1$, then the MLE is equivalent to Maximum Entropy IRL \citep{ziebart2008maximum}.

Although Theorem \ref{thm:conv} allows us to justify approximations at the limit, it is also important to ensure that \robot{}'s early behavior is not dangerous. Specifically, we may want \robot{} to err on the side of obedience early on. To investigate this we first prove a necessary property for any IRL-\robot{} to follow \human{}'s order:

\begin{lem}
\label{thm:undominated}
(Undominated necessary) Call $o_n$ \textit{undominated} if there exists $\rparams \in \Theta$ such that $o_n$ is optimal, i.e $o_n = \argmax_a{\rparams^T \phi(s_n, a)}$. It is necessary for $o_n$ to be undominated for an IRL-\robot{} to execute $o_n$.
\end{lem}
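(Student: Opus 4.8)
The plan is to prove the contrapositive: if $o_n$ is \emph{dominated} (no parameter in \rspace{} makes it optimal), then no IRL-\robot{} ever executes it. The whole argument is a direct unpacking of the IRL-\robot{} definition specialized to the repeated game, so I expect it to be short rather than computational.

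First I would invoke the repeated-game simplification, under which the $Q$-function is linear in the features, $Q(s_n, a; \rest_n) = \rest_n^T \feats(s_n, a)$. Substituting this into the IRL-\robot{} policy (Equation~\ref{eq:irl-r}) gives $\rpolicy(\hist) = \argmax_a \rest_n(\hist)^T \feats(s_n, a)$; that is, the robot always executes an action that is optimal for its current point estimate $\rest_n(\hist)$.

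The second step is the key observation. Suppose some IRL-\robot{} executes $o_n$ on history \hist{}. Then $o_n \in \argmax_a \rest_n(\hist)^T \feats(s_n, a)$, so the estimate $\rest_n(\hist)$ is itself a parameter for which $o_n$ is optimal. Taking $\rparams = \rest_n(\hist)$ in the definition of \emph{undominated} exhibits the required witness, and hence $o_n$ is undominated. Contrapositively, a dominated order can never be the output of any IRL-\robot{}, which is exactly the claim.

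The only point requiring care, and the sole potential obstacle, is that this witness $\rest_n(\hist)$ must genuinely lie in \rspace{}, since ``undominated'' quantifies over $\rparams \in \rspace$. I would record this as a standing property of the estimators in play: the MLE and MAP estimates are by construction maximizers over \rspace{} and so belong to it, and the posterior mean belongs to \rspace{} whenever \rspace{} is convex. Under the assumption $\rest_n(\hist) \in \rspace$ the witness construction goes through immediately; the lemma would fail only if an estimator were allowed to report a parameter outside \rspace{}, in which case the action it optimizes need not be optimal for any legal reward parameter and the witness would be invalid.
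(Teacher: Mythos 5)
Your proof is correct and takes essentially the same route as the paper's: both simply unpack the IRL-\robot{} policy $\rpolicy(\hist) = \argmax_a \rest_n(\hist)^T \phi(s_n, a)$ and observe that the executed order must be optimal for the estimate itself, so $\rest_n(\hist)$ serves as the witness making $o_n$ undominated (the paper states this directly rather than contrapositively, and adds only an illustrative example of a dominated action). Your closing caveat that $\rest_n(\hist)$ must lie in \rspace{} is a point the paper leaves implicit in its definition of an IRL-\robot{} as estimating $\rparams \in \rspace$; flagging it is sound extra care, not a different argument.
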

\begin{proof}
\robot{} executes $a = \argmax_{a} \rest_{n}^T\phi(s_n, a)$, so it is not possible for \robot{} to execute $o_n$ if there is no choice of $\rest{}_{n}$ that makes $o_n$ optimal. This can happen when one action dominates another action in value. For example, suppose $\rspace = \mathbb{R}^{2}$ and there are three actions with features $\phi(s, a_1) = [-1, -1]$, $\phi(s, a_2) = [0, 0]$, $\phi(s, a_3) = [1, 1]$. If \human{} picks $a_2$, then there is no $\rparams \in \rspace$ that makes $a_2$ optimal, and thus \robot{} will never follow $a_2$.
\end{proof}

One basic property we may want \robot{} to have is for it to listen to \human{} early on. The next theorem looks at we can guarantee about \robot{}'s obedience to the first order when \human{} is noisily rational.

\begin{figure*}[t]
\centering
\begin{subfigure}{\columnwidth}
\includegraphics[width=0.9\columnwidth]{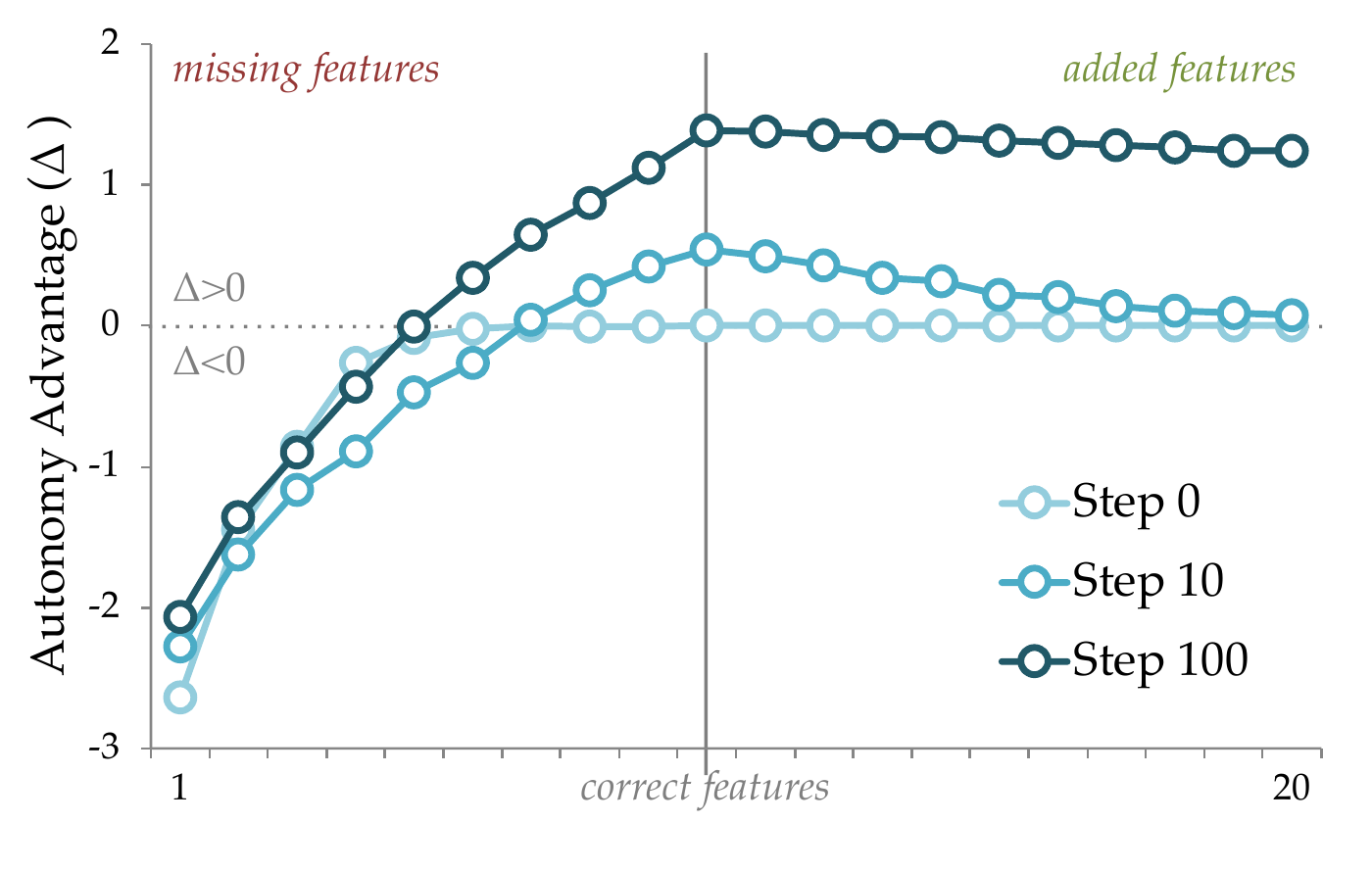}
\label{fig:}
\end{subfigure}
\begin{subfigure}{\columnwidth}
\includegraphics[width=0.9\columnwidth]{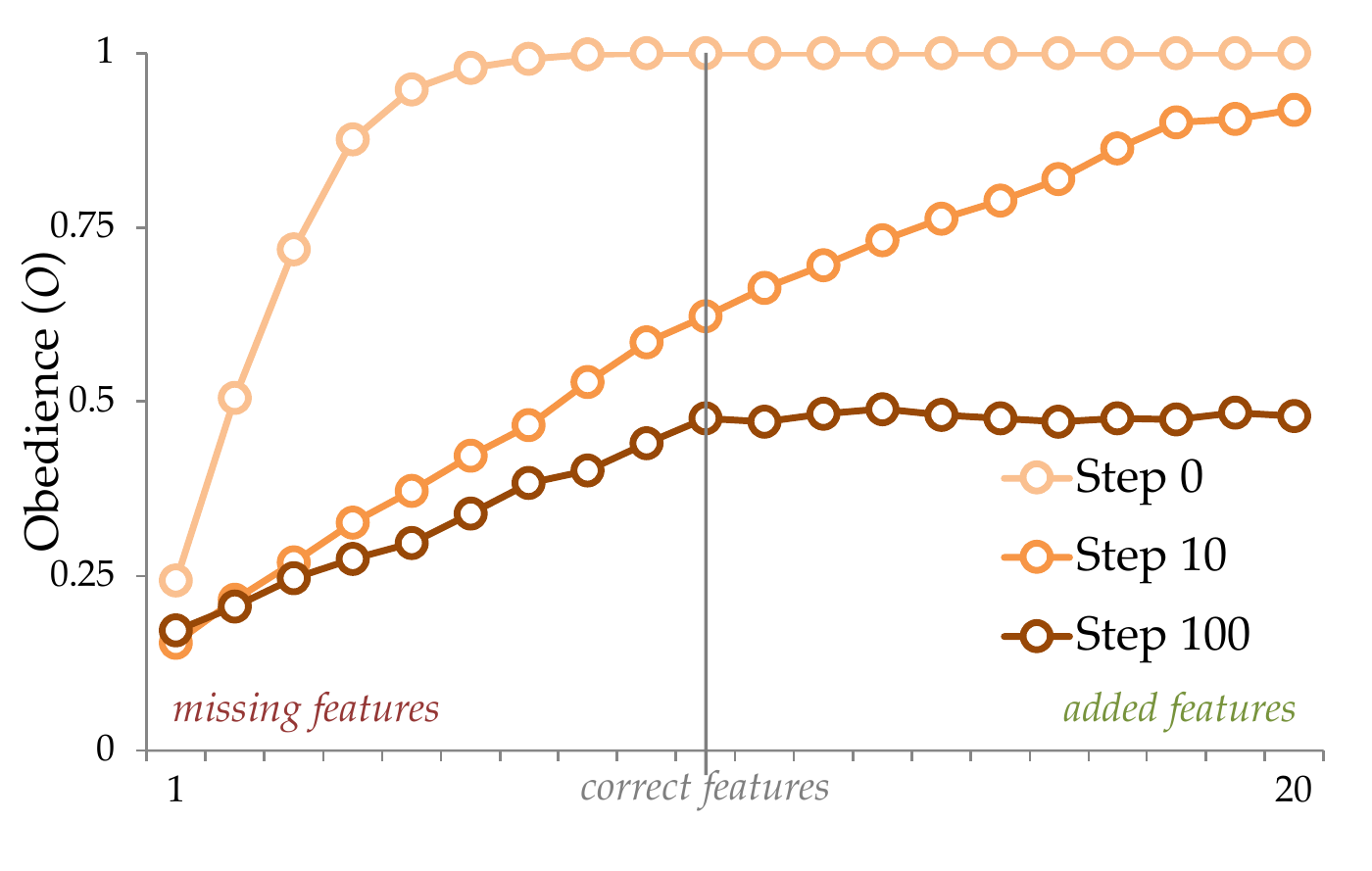}
\end{subfigure}
\caption{\adv{} and \ob{} when \rspace{} is misspecified}
\label{fig:missing_feats}
\end{figure*}

\begin{thm}
\label{thm:1order}
(Obedience to noisily rational \human{} on 1st order)
\begin{enumerate}[label=(\alph*)]
    \label{thm:pot-opt}
    \item When $\rspace = \mathbb{R}^{d}$ the MLE does not exist after one order. But if we constrain the norm of \rest{} to not be too large, then we can ensure that \robot{} follows an undominated $o_1$. In particular, $\exists K$ such that when \robot{} plans using the MLE $\rest{} \in \rspace' = \{\theta \in \rspace : ||\theta||_{2} \leq K \}$ \robot{} executes $o_1$ if and only if $o_1$ is undominated.
    \item If any IRL robot follows $o_1$, so does MLE-\robot{}. In particular, if \optr{} follows $o_1$, so does MLE-\robot{}.
    \item If \robot{} uses the MAP or posterior mean, it is not guaranteed to follow an undominated $o_1$. Furthermore, even if \optr{} follows $o_1$, MAP-\robot{} is not guaranteed to follow $o_1$.
\end{enumerate}
\end{thm}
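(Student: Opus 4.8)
The plan is to handle the three parts in order, observing that (b) will fall out as a corollary of (a) and that (c) reduces to exhibiting two counterexamples. Throughout I work in the repeated-game setting, so after a single order the likelihood that MLE-\robot{} maximizes is the one-observation softmax
\[
L(\rparams) = \frac{\exp\!\big(\rparams^T\phi(s_1,o_1)/\beta\big)}{\sum_{a}\exp\!\big(\rparams^T\phi(s_1,a)/\beta\big)},
\]
which is log-concave in \rparams{}.

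For part (a), I would first show the unconstrained MLE does not exist. If $o_1$ is undominated there is a unit direction $u$ with strict margin $m=\min_{a\neq o_1}u^T(\phi(s_1,o_1)-\phi(s_1,a))>0$; scaling $\rparams=cu$ and letting $c\to\infty$ drives $L(cu)\to 1$, so $\sup_{\rparams\in\mathbb{R}^d}L=1$ is never attained and no maximizer exists. Restricting to the compact ball $\rspace'=\{\rparams:\|\rparams\|_2\le K\}$ restores existence by continuity. The crux is then choosing $K$ so the constrained MLE makes $o_1$ optimal. Here I would use the elementary bound that whenever $o_1$ is \emph{not} the unique maximizer of $\rparams^T\phi(s_1,\cdot)$ — i.e. some $a'$ has $\rparams^T\phi(s_1,a')\ge \rparams^T\phi(s_1,o_1)$ — the softmax weight satisfies $L(\rparams)\le \tfrac12$. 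Since $L(cu)\ge \big(1+(|\mathcal{A}|-1)e^{-cm/\beta}\big)^{-1}\to 1$, picking $K$ large enough that $\sup_{\rspace'}L>\tfrac12$ (e.g. $K>\beta\ln(|\mathcal{A}|-1)/m$) forces the constrained maximizer \rest{} to make $o_1$ the strict optimum, so \robot{} executes $o_1$; taking $K$ above the finite maximum of these thresholds over all undominated actions at $s_1$ handles every possible first order at once. The converse is immediate from Lemma \ref{thm:undominated}: a dominated $o_1$ is optimal for no estimate and is never executed. This yields the ``if and only if.''

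For part (b), I would note that any IRL-\robot{} executes $o_1$ only when $o_1=\argmax_a \rest^T\phi(s_1,a)$, which forces $o_1$ to be undominated; hence the set of first orders followed by \emph{any} IRL-\robot{} (in particular by \optr{}) is contained in the undominated set. Part (a) shows MLE-\robot{} follows every undominated $o_1$, so its followed set contains that of every other IRL-\robot{}, giving the claim and its \optr{} specialization for free.

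For part (c) I would construct two counterexamples. For the first claim, take an undominated $o_1$ but a prior sharply concentrated on a region of \rspace{} where a competing action $a'$ is strictly optimal; because a noisily rational \human{} with large $\beta$ contributes only a weak likelihood factor, both the posterior mean (i.e. \optr{}) and the MAP estimate stay in the $a'$-optimal region, so neither follows $o_1$. For the second claim I would build a deliberately skewed posterior — easiest in one dimension with two actions of features $\phi(s_1,a_1)=1,\phi(s_1,a_2)=-1$, so that $o_1=a_1$ is optimal exactly when $\rparams>0$ — using a prior whose mode sits at a negative value but whose heavy positive tail pulls the mean above zero; then \optr{} follows $o_1$ while MAP-\robot{} does not. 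The main obstacle is part (a): making the ``if'' direction airtight requires pinning down the margin $m$, confirming the norm bound $K$ (depending on $m$, $\beta$, and $|\mathcal{A}|$) can be chosen to force the constrained maximizer to the strict optimum, and dispatching the degenerate case in which $o_1$ is only weakly optimal. The bound $L(\rparams)\le\tfrac12$ away from strict optimality is what makes this clean, so I would verify it first and then assemble the rest around it.
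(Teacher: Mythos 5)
Your proposal is correct and follows essentially the same route as the paper's proof: non-existence of the unconstrained MLE via scaling an undominated-witnessing $\rparams$, a norm bound $K$ chosen so the constrained MLE assigns $o_1$ likelihood above $\tfrac12$ (which forces $o_1$ to be the argmax since the softmax and linear-reward orderings agree), part (b) as an immediate corollary of Lemma~\ref{thm:undominated} plus (a), and adversarial priors for part (c). Your explicit threshold $K>\beta\ln(|\mathcal{A}|-1)/m$ and your concrete one-dimensional counterexamples simply flesh out details the paper states abstractly or omits for space, and your flag about weakly optimal $o_1$ is a tie-breaking subtlety the paper's own argument shares.
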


\begin{proof}
\begin{enumerate}[label=(\alph*)]
    \item 
    The only if condition holds from Lemma \ref{thm:undominated}. Suppose $o_1$ is undominated. Then there exists $\theta^{*}$ such that $o_1$ is optimal for $\theta^{*}$. $o_1$ is still optimal for a scaled version, $c\theta^{*}$. As $c\rightarrow \infty$, $\nohp(o_1; c\theta^{*}) \rightarrow 1$, but never reaches it. Thus, the MLE does not exist.
    
    However since $\nohp(o_1; c\theta^{*})$ monotonically increases towards 1, $\exists C$ such that for $c > C$, $\nohp(o; c\theta^{*}) > 0.5$. If $K > C||\theta^{*}||$, then the MLE will be optimal for $o_1$ because $\nohp(o_1; \rest_1) \geq 0.5$ and \robot{} executes $a = \argmax_{a} \rest^T\phi(a) = \argmax_{a} \nohp(a; \rest)$. Therefore, in practice we can simply use the MLE while constraining $||\rparams||_{2}$ to be less than some very large number.
    
    \item From Lemma \ref{thm:undominated} if any IRL-\robot{} follows $o_1$, then $o_1$ is undominated. Then by (a) MLE-\robot{} follows $o_1$.

    \item For space we omit explicit counterexamples, but both statements hold because we can construct adversarial priors for which $o_1$ is suboptimal for the posterior mean and for which $o_1$ is optimal for the posterior mean, but not for the MAP.
\end{enumerate}
\end{proof}

Theorem \ref{thm:1order} suggests that at least at the beginning when \robot{} uses the MLE it errs on the side of giving us the ``benefit of the doubt", which is exactly what we would want out of an approximation. 

Figure \ref{fig:adv} and \ref{fig:obedience} plot \adv{} and \ob{} for an IRL robot that uses the MLE. As expected, \robot{} gains more reward than a blindly obedient one ($\adv > 0$), eventually converging to the maximum autonomy advantage (Figure \ref{fig:adv}). On the other hand, as \robot{} learns about $\rparams$, its obedience also decreases, until eventually it only listens to the human when she gives the optimal order (Figure \ref{fig:obedience}). 

As pointed out in Remark \ref{rationality-adv}, \adv{} is eventually higher for more irrational humans. However, a more irrational human also provides noisier evidence of \rparams, so the rate of convergence of $\adv{}$ is also slower. So, although initially \adv{} may be lower for a more irrational \human{}, in the long run there is more to gain from being autonomous when interacting with a more irrational human. Figure \ref{fig:adv-rat} shows this empirically.

All experiments in this paper use the following parameters unless otherwise noted. At the start of each episode $\rparams \sim \mathcal{N}(0, I)$ and at each step $\phi_n(a) \sim \mathcal{N}(0, I)$. There are 10 actions, 10 features, and $\beta = 2$. \footnote{All experiments can be replicated using the Jupyter notebook available at \url{http://github.com/smilli/obedience}}

Finally, even with good approximations we may still have good reason for feeling hesitation about disobedient robots. The naive analysis presented so far assumes that \robot{}'s models are perfect, but it is almost certain that \robot{}'s models of complex things like human preferences and behavior will be incorrect. By Theorem \ref{thm:undominated}, \robot{} will not obey even the first order made by \human{} if there is no $\rparams{} \in \rspace{}$ that makes \human{}'s order optimal. So clearly, it is possible to have disastrous effects by having an incorrect model of \rspace{}. In the next section we look at how misspecification of possible human preferences (\rspace{}) and human behavior (\hpolicy{}) can cause the robot to be overconfident and in turn less obedient than it should be. \textit{The autonomy advantage can easily become the rebellion regret.}

\section{Model Misspecification}
\label{sec:wrong-model}

\prg{Incorrect Model of Human Behavior}
Having an incorrect model of \human{}'s rationality ($\beta$) does not change the actions of MLE-\robot{}, but does change the actions of \optr{}.

\begin{thm}
\label{thm:incorrectpolicy}
(Incorrect model of human policy) Let $\beta^{0}$ be \human{}'s true rationality and $\beta'$ be the rationality that \robot{} believes \human{} has. Let $\rest$ and $\rest'$ be \robot{}'s estimate under the true model and misspecified model, respectively. Call \robot{} \textbf{robust} if its actions under $\beta'$ are the same as its actions under $\beta^{0}$.

\begin{enumerate}[label=(\alph*)]
\item MLE-\robot{} is robust.
\item \optr{} is not robust.
\end{enumerate}
\end{thm}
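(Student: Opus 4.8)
The plan is to exploit the precise functional form of the noisily-rational likelihood, which depends on \rparams{} and $\beta$ only through the ratio $\rparams/\beta$. Writing the likelihood of the observed orders $o_1,\dots,o_n$ as
\[
L(\rparams,\beta) \;=\; \prod_{i=1}^{n}\nohp(s_i,o_i;\rparams,\beta)
\;=\; \prod_{i=1}^{n}\frac{\exp\!\big(\rparams^{T}\phi(s_i,o_i)/\beta\big)}{\sum_{a}\exp\!\big(\rparams^{T}\phi(s_i,a)/\beta\big)},
\]
I would substitute $\psi = \rparams/\beta$ and observe that $L(\rparams,\beta)=\tilde L(\rparams/\beta)$ for a function $\tilde L$ that carries no dependence on $\beta$. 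This single observation drives both parts.

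For (a), since $\rparams\mapsto \rparams/\beta$ is a bijection for any fixed $\beta>0$, maximizing $L(\cdot,\beta)$ over \rparams{} is the same as maximizing $\tilde L$ over $\psi$; hence the MLE is $\rest(\beta)=\beta\,\psi^{*}$ with $\psi^{*}=\argmax_\psi \tilde L(\psi)$ independent of $\beta$. The two estimates $\rest=\beta^{0}\psi^{*}$ and $\rest'=\beta'\psi^{*}$ therefore differ only by the positive scalar $\beta'/\beta^{0}$. Because \robot{}'s action is $\argmax_{a}\rest^{T}\phi_n(a)$ and the $\argmax$ is invariant to multiplication by a positive constant, the action under $\beta'$ equals the action under $\beta^{0}$, so MLE-\robot{} is robust. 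I would also note the caveat flagged in \thmref{thm:1order}: when the unconstrained MLE runs off to infinity, one takes $\psi^{*}$ as the limiting maximizing direction, which is still $\beta$-independent, so the conclusion is unchanged.

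For (b) the scaling collapses because the posterior mean is an average of \rparams{} against both the likelihood and the prior, $\rest(\beta)=\mathbb{E}[\rparams\mid h,\beta]=\big(\int \rparams\,\tilde L(\rparams/\beta)P(\rparams)\,d\rparams\big)/\big(\int \tilde L(\rparams/\beta)P(\rparams)\,d\rparams\big)$, and the prior $P(\rparams)$ does not rescale with $\beta$. The clean way to exhibit non-robustness is a limiting/continuity counterexample: as $\beta\to\infty$ the likelihood flattens to a constant, so $\rest(\beta)\to \mathbb{E}[\rparams]$, the prior mean; as $\beta\to 0$ the likelihood concentrates and $\rest(\beta)$ is pulled toward the data-consistent region. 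I would choose a (Gaussian) prior whose mean $\mu_0$ makes action $a_1$ optimal at the current step, i.e. $\mu_0^{T}\phi_n(a_1)>\mu_0^{T}\phi_n(a_2)$, while choosing the history of orders to provide strong evidence for a \rparams{} under which $a_2$ is optimal. Then for a large believed $\beta'$ the posterior mean sits near $\mu_0$ and \optr{} selects $a_1$, whereas for a small true $\beta^{0}$ the posterior mean is dragged toward the data and \optr{} selects $a_2$; the two actions differ, so \optr{} is not robust.

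The main obstacle is the bookkeeping in (b): one must pick concrete $\phi(s_i,a)$, a prior, and two rationality levels and then verify that the sign of $\rest(\beta)^{T}(\phi_n(a_1)-\phi_n(a_2))$ actually flips between $\beta^{0}$ and $\beta'$. Part (a) is essentially immediate once the $\rparams/\beta$ reparametrization is noticed, so the real work — and the only place a careless construction could fail — is checking that the prior-mean-versus-data tension in (b) is strong enough to move the $\argmax$ across the decision boundary rather than merely shifting the estimate's magnitude.
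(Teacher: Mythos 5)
Your proposal matches the paper's proof essentially exactly: part (a) is the same $\eta = \rparams/\beta$ reparametrization giving $\rest' = (\beta'/\beta^{0})\rest$ and scale-invariance of the $\argmax$, and part (b) is the same prior-versus-likelihood tension (likelihood dominates as $\beta \rightarrow 0$, prior dominates as $\beta \rightarrow \infty$) that the paper also only sketches rather than instantiating. Your version is, if anything, slightly more careful than the paper's, since you flag the MLE-nonexistence caveat from Theorem~\ref{thm:1order} and outline a concrete Gaussian-prior counterexample for (b).
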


\begin{proof}
    \begin{enumerate}[label=(\alph*)]
    \item The log likelihood $l(\hist| \rparams)$ is concave in $\eta = \rparams/\beta$. So, $\rest'_{n} = (\beta'/\beta^{0})\rest_{n}$. This does not change \robot{}'s action: $\argmax_{a} \rest_{n}^{'T} \phi_n(a) = \argmax_{a} \rest_{n}^T \phi_n(a)$
    
    \item Counterexamples can be constructed based on the fact that as $\beta \rightarrow 0$, \human{} becomes rational, but as $\beta \rightarrow \infty$, \human{} becomes completely random. Thus, the likelihood will ``win" over the prior for $\beta \rightarrow 0$, but not when $\beta \rightarrow \infty$.
    \end{enumerate}

\end{proof}

MLE-\robot{} is more robust than the optimal \optr{}. This suggests a reason beyond computational savings for using approximations: the approximations may be more robust to misspecification than the optimal policy.

\begin{rem}
Theorem \ref{thm:incorrectpolicy} may give us insight into why Maximum Entropy IRL (which is the MLE with $\beta = 1$) works well in practice. In simple environments where noisy rationality can be used as a model of human behavior, getting the level of noisiness right doesn't matter.
\end{rem}

\prg{Incorrect Model of Human Preferences}
The simplest way that \human{}'s preferences may be misspecified is through the featurization of \rparams{}. Suppose $\rparams{} \in \rspace = \mathbb{R}^{d}$. \robot{} believes that $\rspace = \mathbb{R}^{d'}$. \robot{} may be missing features ($d' < d$) or may have irrelevant features ($d' > d$). \robot{} observes a $d'$ dimensional feature vector for each action: $\phi_n(a) \sim N(0, I^{d' \times d'})$. The true $\rparams{}$ depends on only the first $d$ features, but \robot{} estimates $\rparams{} \in \mathbb{R}^{d'}$. Figure \ref{fig:missing_feats} shows how \adv{} and \ob{} change over time as a function of the number of features for a MLE-\robot{}. When \robot{} has irrelevant features it still achieves a positive \adv{} (and still converges to the maximum \adv{} because $\rest{}$ remains consistent over a superset of $\rspace$). But if \robot{} is missing features, then $\adv$ may be negative, and thus \robot{} would be better off being blindly obedient instead. Furthermore, when \robot{} contains extra features it is more obedient than it would be with the true model. But if \robot{} is missing features, then it is less obedient than it should be. This suggests that to ensure \robot{} errs on the side of obedience we should err on the side of giving \robot{} a \textit{more} complex model.

\prg{Detecting Misspecification}
If \robot{} has the wrong model of \rspace{}, \robot{} may be better off being obedient. In the remainder of this section we look at how \robot{} can detect that it is missing features and act accordingly obedient.

\begin{figure}[t]
\centering
\includegraphics[width=\columnwidth]{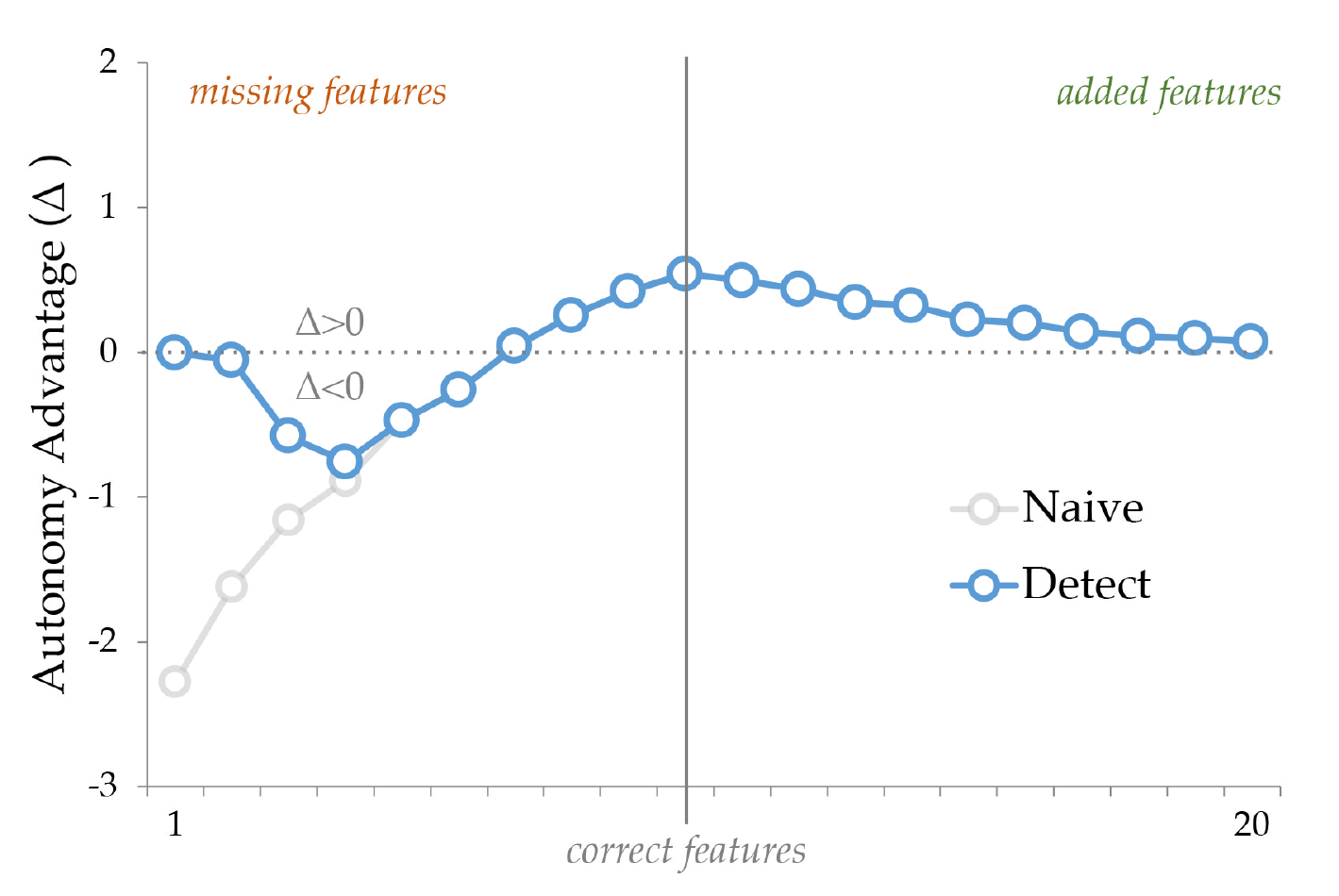}
\caption{\textit{(Detecting misspecification)} The bold line shows the \robot{} that tries detecting missing features (Equation \ref{detect-policy}), as compared to MLE-\robot{} (which is also shown in Figure \ref{fig:missing_feats}).}
\label{fig:detect_misspec}
\end{figure}

\begin{rem}
\label{policy-mixing}
(Policy mixing) We can make \robot{} more obedient, while maintaining convergence to the maximum advantage, by mixing \robot{}'s policy $\rpolicy^{I}$ with a blindly obedient policy:
\begin{equation*}
\rpolicy(\hist) = 1\{ \delta_n = 0 \} \bopolicy(\hist) + 1\{\delta_n = 1\} \rpolicy^{I}(\hist)
\end{equation*}
\begin{equation*}
P(\delta_n=i) = \begin{cases}
c_{n} & i = 0 \\
1 - c_{n} & i = 1
\end{cases}
\end{equation*}
where $1 \geq c_{n} \geq 0$ with $c_{n} \rightarrow 0$. In particular, we can have an initial ``burn-in" period where \robot{} is blindly obedient for a finite number of rounds before switching to $\rpolicy^{I}$.
\end{rem}

By Theorem \ref{thm:1order} we know MLE-\robot{} will always obey \human{}'s first order if it is undominated. This means that for MLE-\robot{}, $\ob_{1}$ should be close to one if undominated orders are expected to be rare. As pointed out in Remark \ref{policy-mixing} we can have an initial ``burn-in" period where \robot{} always obeys \human{}. Let \robot{} have a burn-in obedience period of $B$ rounds. \robot{} uses this burn-in period to calculate the sample obedience on the first order:

$$\tilde{\ob}_1 = \frac{1}{B} \sum_{i=1}^{B} 1\{ \argmax_{a} \rest_{1}(h_i) ^T\phi_i(a) = o_i \}$$

If $\tilde{\ob}_1$ is not close to one, then it is likely that \robot{} has the wrong model of \rspace{}, and would be better off just being obedient. So, we can choose some small $\epsilon$ and make \robot{}'s policy
\begin{align}
\label{detect-policy}
\rpolicy(\hist) = 
\begin{cases}
o_n & n \leq B \\
o_n & n > B, \tilde{\ob}_{1} < 1 -  \epsilon \\
\argmax_{a} \rest_{n}^T\phi_n(a) & n > B, \tilde{\ob}_{1} > 1 - \epsilon
\end{cases}
\end{align}

Figure \ref{fig:detect_misspec} shows the \adv{} of this robot as compared to the MLE-\robot{} from Figure \ref{fig:missing_feats} after using the first ten orders as a burn-in period. This \robot{} achieves higher \adv{} than MLE-\robot{} when missing features and still does as well as MLE-\robot{} when it isn't missing features. 

Note that this strategy relies on the fact that MLE-\robot{} has the property of always following an undominated first order. If \robot{} were using the optimal policy, it is unclear what kind of simple property we could use to detect missing features. This gives us another reason for using an approximation: we may be able to leverage its properties to detect misspecification.

\section{Related Work}

\prg{Ensuring Obedience}
There are several recent examples of research that aim to provably ensure that \human{} can interrupt \robot{}. \citep{soares2015corrigibility,Orseau2016SafelyIA,hadfield2016off}. \citet{hadfield2016off} show that \robot{}'s obedience depends on a tradeoff between \robot{}'s uncertainty about \rparams{} and \human{}'s rationality. However, they considered \robot{}'s uncertainty in the abstract. In practice \robot{} would need to learn about \rparams{} \textit{through} \human{}'s behavior. Our work analyzes how the way \robot{} learns about \rparams{} impacts its performance and obedience.

\prg{Intent Inference For Assistance}
 Instead of just being blindly obedient, an autonomous system can infer \human{}'s intention and actively assist \human{} in achieving it. Do What I Mean software packages interpret the intent behind what a programmer wrote to automatically correct programming errors  \citep{teitelman19707}. When a user uses a telepointer network lag can cause jitter in her cursor's path. \citet{gutwin2003using} address this by displaying a prediction of the user's desired path, rather than the actual cursor path. 
 
 Similarly, in assistive teleoperation, the robot does not directly execute \human{}'s (potentially noisy) input. It instead acts based on an inference of \human{}'s intent. In \cite{dragan2012formalizing} \robot{} acts according to an arbitration between \human{}'s policy and \robot{}'s prediction of \human{}'s policy. Like our work, \cite{javdani2015shared} formalize assistive teleoperation as a POMDP in which \human{}'s goals are unknown, and try to optimize an inference of \human{}'s goal. While assistive teleoperation apriori assumes that \robot{} should act assistively, we show that under model misspecification sometimes it is better for \robot{} to simply defer to \human{}, and contribute a method to decide between active assistance and blind obedience (Remark \ref{policy-mixing}).

\prg{Inverse Reinforcement Learning}
We use inverse reinforcement learning \citep{ng2000algorithms,abbeel2004apprenticeship} to infer \rparams{} from \human{}'s orders. We analyze how different IRL algorithms affect autonomy advantage and obedience, properties not previously studied in the literature. In addition, we analyze how model misspecification of the features of the space of reward parameters or the \human{}'s rationality impacts autonomy advantage and obedience. 

IRL algorithms typically assume that \human{} is rational or noisily rational. We show that Maximum Entropy IRL \citep{ziebart2008maximum} is robust to misspecification of a noisily rational \human{}'s rationality ($\beta$). However, humans are not truly noisily rational, and in the future it is important to investigate other models of humans in IRL and their potential misspecifications. \cite{evans2016learning} takes a step in this direction and models \human{} as temporally inconsistent and potentially having false beliefs. In addition, IRL assumes that \human{} acts without awareness of \robot{}'s presence, \textit{cooperative inverse reinforcement learning} \citep{hadfield2016cooperative} relaxes this assumption by modeling the interaction between \human{} and \robot{} as a two-player cooperative game.

\section{Conclusion}
To summarize our key takeaways:
\begin{enumerate}
\item ($\adv{} > 0$) If \human{} is not rational, then \robot{} can always attain a positive \adv{}. Thus, forcing \robot{} to be blindly obedient requires giving up on a positive $\adv{}$.
\item ($\adv{}$ vs $\ob{}$) There exists a tradeoff between $\adv{}$ and $\ob{}$. At the limit \optr{} attains the maximum $\adv{}$, but only obeys \human{}'s order when it is the optimal action.
\item (MLE-\robot{}) When \human{} is noisily rational MLE-\robot{} is at least as obedient as any other IRL-\robot{} to \human{}'s first order. This suggests that the MLE is a good approximation to \optr{} because it errs on the side of obedience.
\item (Wrong $\beta$) MLE-\robot{} is robust to having the wrong model of the human's rationality ($\beta$), but \optr{} is not. This suggests that we may not want to use the ``optimal" policy because it may not be very robust to misspecification.
\item (Wrong $\rspace$) If \robot{} has extra features, it is more obedient than with the true model, whereas if it is missing features, then it is less obedient. If \robot{} has extra features, it will still converge to the maximum \adv{}. But if \robot{} is missing features, it is sometimes better for \robot{} to be obedient. This implies that erring on the side of extra features is far better than erring on the side of fewer features.
\item (Detecting wrong $\rspace$) We can detect missing features by checking how likely MLE-\robot{} is to follow the first order.
\end{enumerate}

Overall, our analysis suggests that in the long-term we should aim to create robots that intelligently decide when to follow orders, but in the meantime it is crucial to ensure that these robots err on the side of obedience and are robust to misspecified models.

\section{Acknowledgements}
We thank Daniel Filan for feedback on an early draft.


\end{document}